\documentclass[12pt]{article}
\usepackage{amsmath}
\usepackage{graphicx,psfrag,epsf}
\usepackage{enumerate}
\usepackage{url} 
\usepackage{float}

\addtolength{\oddsidemargin}{-.5in}%
\addtolength{\evensidemargin}{-.5in}%
\addtolength{\textwidth}{1in}%
\addtolength{\textheight}{1in}%
\addtolength{\topmargin}{-0.5in}%

\usepackage{algorithm,algorithmic}
\usepackage[algo2e]{algorithm2e}
\usepackage{multirow}
\usepackage{lipsum}
\usepackage{amsfonts}
\usepackage{cite}      
                  
\usepackage{graphicx}   

\usepackage{amsmath}
\usepackage{amsthm}
\usepackage{amsfonts}
\usepackage{amssymb}
\usepackage{graphicx}
\usepackage{subfigure}
\usepackage{transparent}
\usepackage{multirow}
\usepackage{color}
\usepackage{blkarray}
\usepackage[round]{natbib}
\usepackage{booktabs}
\usepackage[dvipsnames]{xcolor}
\usepackage[]{hyperref}
\hypersetup{colorlinks=true, 
	linkcolor=red,       
    citecolor=blue,        
    filecolor=magenta,      
    urlcolor=cyan           
}  




\newcommand{\bm}[1]{\boldsymbol{#1}}
\newcommand{\E}{\mathbb{E}}

\renewcommand{\d}{\mathrm{d}}
\renewcommand{\P}{\mathbb{P}}

\newcommand{\Y}{\bm{Y}}

\newcommand{\mE}{\mathcal{E}}

\newcommand{\mB}{\mathcal{B}}

\newcommand{\mT}{\mathcal{T}}
\newcommand{\wt}[1]{\widetilde{#1}}

\newcommand{\F}{\mathcal{F}}

\newcommand{\iid}{\stackrel{iid}{\sim}}

\newcommand{\wh}[1]{\smash{\widehat{#1}}}

\usepackage{array}
\newcolumntype{C}{@{\extracolsep{0.5in}}c@{\extracolsep{0pt}}}

\def\C {\,|\:}

\def\C {\,|\:}

\def\mF{\mathcal{F}}

\def\B{\bm{B}}
\def\b{\bm{\beta}}
\def\Y{\bm{Y}}

\def\x{\bm{x}}

\def\mV{\mathcal{V}}

\def\b{\bm{\beta}}

\renewcommand{\d}{\mathrm{d}\,}
\newcommand{\e}{\mathrm{e}}

\newcommand{\N}{\mathbb{N}}
\newcommand{\R}{\mathbb{R}}
\newcommand{\Ha}{\mathcal{H}^\alpha}

\newtheorem{definition}{Definition}[section]
\newtheorem{lemma}{Lemma}[section]

\newtheorem{theorem}{Theorem}[section]

\newtheorem{remark}{Remark}[section]
\newtheorem{corollary}{Corollary}[section]

 \theoremstyle{assumption}

\bibliographystyle{chicago}

\begin{document}

\def\spacingset#1{\renewcommand{\baselinestretch}%
{#1}\small\normalsize} \spacingset{1}


  \title{\sf 
   On Theory for BART}

 \spacingset{1.3}
  
  \author{
    Veronika Ro\v{c}kov\'{a}\footnote{ Assistant Professor in Econometrics and Statistics at the {University of Chicago Booth School of Business}. This research was supported  by  the James S. Kemper Foundation Faculty Research Fund.}
     \and Enakshi Saha\footnote{
    3$^{rd}$ year PhD Student at the  {Department of  Statistics, University of Chicago}.
    } \hspace{0.3cm}
} 
  
   \maketitle

\bigskip

\begin{abstract}
Ensemble learning is a statistical paradigm 
built on the premise that  many weak learners can perform exceptionally well when deployed collectively.
The BART method of \cite{chipman2010bart} is a prominent example of {\sl Bayesian} ensemble learning, where each learner is a tree.
Due to its impressive performance, BART has received a lot of attention from practitioners.
Despite its wide popularity, however, theoretical studies of BART have  begun emerging only very recently.
Laying the foundations for the theoretical analysis of Bayesian forests,  \cite{rockova2017posterior} showed optimal posterior concentration   under {\sl conditionally uniform tree priors.}
These priors  deviate from the actual priors implemented in BART. Here, we study the exact BART prior and propose a simple modification so that  it {\sl also} enjoys optimality properties.
To this end, we dive into branching process theory. We obtain  tail bounds for the distribution of total progeny under heterogeneous Galton-Watson (GW) processes exploiting their connection to random walks.
We conclude with a result stating the optimal rate of posterior convergence for BART.
\end{abstract}

\clearpage
 \spacingset{1.5}
 
\section{Bayesian Machine Learning}


Bayesian Machine Learning  and Bayesian Non-parametrics share the same objective: increasing flexibility necessary to address very complex problems
 using a Bayesian approach with minimal subjective input. While the two fields can be, to some extent,  regarded as  synonymous, their emphasis is quite different.
Bayesian non-parametrics has evolved into a largely theoretical field, studying frequentist properties of posterior objects in inifinite-dimensional parameter spaces.
Bayesian machine learning, on the other hand, has been primarily concerned with developing scalable tools for computing such posterior objects.  In this work, we bridge these two fields by providing theoretical insights into one of the workhorses of Bayesian machine learning, the BART method.

Bayesian Additive Regression Trees (BART) are one of the more widely used   Bayesian prediction tools and their popularity continues to grow.
Compared to its competitors (e.g. Gaussian processes, random forests or neural networks) BART requires considerably less  tuning, while  maintaining  robust and relatively scalable performance  (\texttt{BART}  \rm{R} package of McCulloch (2017), \texttt{bartMachine} \rm{R} package of \cite{bleich2014variable}, top down particle filtering of \cite{lakshminarayanan2013top}).  BART has been successfully deployed in many prediction tasks, often outperforming its competitors (see predictive comparisons on $42$ data sets in \cite{chipman2010bart}). More recently, its flexibility and stellar prediction  has been capitalized on
in causal inference  tasks for heterogeneous/average treatment effect estimation (\cite{hill2011bayesian}, \cite{hahn2017bayesian} and references therein).  BART has also served as a springboard for various incarnations and extensions including: Monotone BART (\cite{chipman2016high}), Heteroscedastic BART (\cite{pratola2017heteroscedastic}), treed Gaussian processes (\cite{gramacy2008bayesian}) and dynamic trees (\cite{taddy2011dynamic}), to list a few. Related non-parametric constructions based on recursive partitioning have proliferated in the Bayesian machine learning community for  modeling relational data (Mondrian process of \cite{roy2008mondrian}, Mondian forests (\cite{lakshminarayanan2014mondrian}). In short, BART continues to have a decided impact on the field of Bayesian non-parametrics/machine learning.

Despite its widespread  popularity, however, the theory has not caught up with its applications. First theoretical results were obtained only very recently. As a precursor to these developments, \cite{coram2006consistency} obtained a consistency result for Bayesian histograms in binary regression with a single predictor. \cite{rockova2017posterior2} provided a posterior concentration result for Bayesian regression histograms in Gaussian non-parametric regression, also with  one predictor.
\cite{rockova2017posterior} (further referred to as RP17) then extended their study to trees and forests in a high-dimensional setup  where $p>n$ and where variable selection uncertainty is present. They obtained the first theoretical results for Bayesian CART, showing optimal posterior 
 concentration (up to a log factor) around a $\nu$-H\"{o}lder continuous regression function (with a smoothness $0<\nu\leq1$). Going further, they also show  optimal performance for Bayesian forests, both in additive and non-additive regression.
  \cite{linero2017bayesian} obtained similar results for Bayesian ensembles, but  for {\sl fractional} posteriors (raised to a power). 
     The proof of RP17, on the other hand, relies on a careful construction of sieves and applies to regular posteriors. 
 In addition, \cite{linero2017bayesian}  do not study step functions (the essence of Bayesian CART and BART) but aggregated smooth kernels, allowing  for $\nu>1$. 
  Building on RP17, \cite{liu2018abc} obtained model selection consistency results (for variable and regularity selection) for Bayesian forests. 

  

Albeit related, the tree priors studied in RP17 are {\sl not} the actual priors deployed in BART.
Here, we develop new tools  for the analysis of the actual BART prior and obtain parallel results to those in RP17.
To begin, we dive into branching process theory to characterize aspects of the distribution on  total progeny under heterogeneous Galton-Watson processes.
Revisiting several useful facts about Galton-Watson processes, including their connection to random walks, we derive a new prior tail bound for the tree size under the BART prior.
With our proving strategy,  the actual prior of \cite{chipman2010bart} {\sl does not} appear to penalize large trees aggressively enough.
We suggest a very simple modification of the prior by altering the splitting probability. With this minor change,  the prior is shown to induce the right amount of regularization and 
optimal speed of posterior convergence.

The paper is structured as follows. Section 2 revisits trees and forests in the context of non-parametric regression and discusses the BART prior. Section 3 reviews the notion of posterior concentration. Section 4 discusses Galton Watson processes and their connection to Bayesian CART. Section 5 is concerned with tail bounds on total progeny. Section 6 and 7 describe prior and concentration properties of BART. Section 7 wraps up with a discussion.

\vspace{-0.5cm}
\section{The Appeal of Trees/Forests}

The data setup under consideration consists of  $Y_i\in\R$, a set of low dimensional outputs, and $ \x_i = (x_{i1} , \ldots , x_{ip} )' \in [0,1]^p $,  a set of high dimensional inputs for $1\leq i\leq n$.  Our statistical framework is    non-parametric regression, which characterizes the input-output relationship  through
$$
Y_i=f_0(\x_i)+\varepsilon_i,\quad \varepsilon_i\iid\mathcal{N}(0,1),
$$
where $f_0:[0,1]^p\rightarrow \R$ is an unknown regression function. A regression tree can be used to reconstruct $f_0$  via a mapping $f_{\mT,{\b}}:[0,1]^p\rightarrow\R$   so that $f_{\mT,{\b}} ( \x ) \approx f_0 ( \x ) $ for $ \x \notin \{ \x_i \}_{i=1}^n $.
Each such mapping   is essentially a step function
\begin{equation}\label{eq:tree_mapping}
f_{\mT,{\b}}(\x)=\sum_{k=1}^K\beta_k\mathbb{I}(\x\in\Omega_k)
\end{equation}
underpinned by a tree-shaped partition $\mT=\{\Omega_k\}_{k=1}^K$ and  a vector of step heights $\b=(\beta_1,\dots,\beta_K)'$.
The vector $\b$ represents quantitative guesses of the average outcome inside each cell.
Each  partition $\mT$ consists of rectangles obtained by recursively applying a splitting rule (an axis-parallel bisection of the predictor space). 
We focus on {\sl binary} tree partitions, where each internal node (box) is split into two children  (formal definition below). 

\begin{definition} (A Binary Tree Partition)\label{def:tree_part}
A binary tree partition $\mT=\{\Omega_k\}_{k=1}^K$ consists of  $K$ rectangular cells $\Omega_k$  obtained with $K-1$ successive recursive binary splits  of the form $\{\x_j \le c \}$ vs $\{\x_j > c\}$ for some $j\in\{1,\dots,p\}$, where the splitting value $c$ is chosen from {\sl observed values} $\{x_{ij}\}_{i=1}^n$. 
\end{definition}

Partitioning is intended to increase within-node homogeneity of outcomes. 
In the traditional CART method (\cite{brieman1984classification}), the tree is obtained by ``greedy growing" (i.e. sequential optimization of some impurity criterion) until homogeneity cannot be substantially improved. The tree growing process  is often followed by ``optimal pruning" to increase generalizability. Prediction  is then determined by terminal nodes of the pruned tree and takes the form either of a class level in classification problems, or the average of the response variable in least squares regression problems (\cite{brieman1984classification}). 

In tree {\sl ensemble} learning, each  constituent  is designed to be a weak learner, addressing a slightly different aspect of the prediction problem. These trees are intended to be shallow and  are woven  into a forest mapping
\begin{equation}\label{eq:forest_mapping}
f_{\mE,{\B}}(\x)=\sum_{t=1}^Tf_{\mT_t,{\b}_t}(\x),
\end{equation}
where each $f_{\mT_t,{\b}_t}(\x)$ is of the form \eqref{eq:tree_mapping}, $\mE=\{\mT_1,\dots,\mT_T\}$ is an ensemble of trees and $\B=\{\b_1,\dots,\b_T\}'$ is a collection of jump sizes for the $T$ trees. Random forests obtain each tree learner from a bootstrapped version of the data. Here, we consider a Bayesian variant, the BART method of \cite{chipman2010bart}, which relies on the posterior distribution over $f_{\mE,\B}$ to reconstruct the unknown regression function $f_0$.


\subsection{Bayesian Trees and Forests}
Bayesian CART was introduced as a Bayesian alternative to CART, where regularization/stabilization is obtained with a prior rather than with pruning (\cite{chipman1998bayesian}, \cite{denison1998bayesian}). The prior distribution is assigned  over a class of step functions
$$
\mF=\{ f_{\mE,\B}(\x) \quad\text{of the form \eqref{eq:forest_mapping} for some $\mE$ and $\B$} \}
$$ 
in a hierarchical manner. 

 
The BART prior by \cite{chipman2010bart} assumes that the number of trees $T$ is fixed. 
The authors recommend a default choice $T=200$ which was seen to provide good results. 
Next, the tree components $(\mT_t, \b_t)$ are a-priori independent of each other in the sense that
\begin{equation}\label{eq:joint_prior}
\pi(\mE,\B)=\prod_{t=1}^T\pi(\mT_t)\pi(\b_t\C\mT_t),
\end{equation}
where $\pi(\mT_t)$ is the prior probability of a partition $\mT_t$ and $\pi(\b_t\C\mT_t)$ is the prior distribution over the jump sizes.

\subsubsection{\bf  Prior on Partitions $\pi(\mT)$}\label{subsub:tree} 

In BART and Bayesian CART of \cite{chipman1998bayesian}, the prior over trees is  specified implicitly as a tree generating stochastic process, described as follows:
\begin{enumerate}
\item Start with a single leave (a root node) $[0,1]^p$.
\item Split a terminal node, say $\Omega_t$, with a probability
\begin{equation}\label{eq:p_split}
p_{split}(\Omega_t)=\frac{\alpha}{(1+d(\Omega_t))^{\gamma}}
\end{equation}
for some $\alpha\in(0,1)$ and $\gamma\geq 0$,
where $d(\Omega_t)$ is the depth of the node $\Omega_t$ in the tree architecture.
\item If the node $\Omega_t$ splits, assign a splitting rule  and create left and right children nodes. The splitting rule consists of picking a split variable $j$ uniformly from available directions $\{1,\dots, p\}$ and
picking a split point $c$ uniformly from available data values $x_{1j},\dots, x_{nj}$. 
Non-uniform priors can also be used to favor splitting values that are thought to be more important. For example,  splitting values can be given more weight towards the center and less weight towards the edges.
\end{enumerate}

\subsubsection{\bf Prior on Step Heights $\pi(\b\C\mT)$}\label{subsub:jump} Given a tree partition $\mT_t$ with $K_t$ steps, we consider  iid Gaussian jumps
$$
\pi(\b_t\C\mT_t)=\prod_{k=1}^{K_t}\phi(\beta_{tj};0,1/T),
$$
where $\phi(x;0,\sigma^2)$ is a Gaussian density with mean $0$ and variance $\sigma^2$. \cite{chipman2010bart} recommend first shifting and rescaling $Y_i$'s so that the observed transformed values range from -0.5 to 0.5. Then they assign a conjugate normal prior $\beta_{tj} \sim N(0, \sigma^2)$, where $\sigma = 0.5/k\sqrt{T}$ for some suitable value of $k$. This is to ensure that the prior assigns substantial probability to the range of the $Y_i$'s. \\

The BART prior also involves  an inverse chi-squared distribution on residual variance,  with hyper-parameters  chosen so that the $q^{th}$ quantile of the prior is located at some sample based variance estimate. While the case of random variance can be incorporated in our analysis (\cite{de2010adaptive}), we will for simplicity assume that the residual variance is fixed.

Existing theoretical work for Bayesian forests (RP17) is available for a different prior on tree partitions $\mT$.  Their analysis assumes a hierarchical prior consisting of (a) a prior on the size of a tree $K$ and (b) a uniform prior over trees of size $K$. This prior is equalitarian in the sense that
 trees with the same number of leaves are a-priori equally likely regardless of their topology. {RP17 also imposed a diversification restriction in their prior, focusing on $\delta$-valid ensembles (Definition 5.3) which consist of trees that do not overlap too much. }
   The prior on the number of leaves $K$ is a very important ingredient for regularization. We will study aspects of its distribution under the actual BART prior in later sections.

\vspace{-0.5cm}
\section{Bayesian Non-parametrics Lense}
One way of assessing the quality of a Bayesian procedure is by studying the learning rate of its posterior, i.e. the speed  at which the posterior distribution shrinks around the truth as $n\rightarrow\infty$. These statements are ultimately framed in a frequentist way, describing the typical behavior of the posterior under the true generative model $\P_{f_0}^{(n)}$. 
Posterior concentration rate results have been  valuable for the proposal and calibration of priors. In infinite-dimensional parameter spaces, such as the one considered here, seemingly innocuous priors can lead to inconsistencies (\cite{cox1993analysis}, \cite{diaconis1986consistency}) and far more care has to be exercised to come up with well-behaved priors. 

The Bayesian approach requires placing a prior measure $\Pi(\cdot)$ on   $\mF$,  the set of qualitative guesses of  $f_0$. Given observed data $\Y^{(n)}=(Y_1,\dots, Y_n)'$,  inference about $f_0$ is then carried out via the posterior distribution
$$
\Pi(A\mid\Y^{(n)})=\frac{\int_A \prod_{i=1}^n \Pi_f(Y_i\mid\x_i)\d\Pi(f)}{\int \prod_{i=1}^n \Pi_f(Y_i\mid\x_i)\d\Pi(f)}\quad\forall A\in\mathcal{B}
$$
where $\mathcal{B}$ is a $\sigma$-field on $\mF$ and 
where $\Pi_f(Y_i\C\x_i)$ is the likelihood function for the output $Y_i$ under $f$.

In Bayesian non-parametrics, one of the usual goals is determining
\emph{how fast the posterior probability measure concentrates around $f_0$} as $n\rightarrow\infty$.
This speed can be assessed by inspecting the size of the smallest  $\|\cdot\|_n$-neighborhoods around $f_0$ that contain most of the posterior probability (\cite{ghosal2007convergence}), where  $\|f\|_n^2=\frac{1}{n}\sum_{i=1}^nf(\x_i)^2$.
For a diameter $\varepsilon>0$ and some $M>0$, we denote with 
{$$
A_{\varepsilon, M}=\{f_{\mE,\B} \in\mF :\|f_{\mE,\B} -f_0\|_n\leq M\,\varepsilon \}
$$} 
the $M\varepsilon$-neighborhood centered around $f_0$. We say that the posterior distribution concentrates at speed $\varepsilon_n\rightarrow 0$  such that $n\,\varepsilon_n^2\rightarrow\infty$ when
\begin{equation}\label{concentration}
\Pi(A_{\varepsilon_n, M_n}^c \C \Y^{(n)} ) \rightarrow 0\quad\text{in $\P_{f_0}^{(n)}$-probability as $n\rightarrow\infty$} 
\end{equation}
 for any $M_n\rightarrow\infty$. Posterior consistency statements are a bit weaker, where $\varepsilon_n$ in \eqref{concentration}  is replaced with a fixed neighborhood $\varepsilon>0$.
 We will position our results using  {$\varepsilon_n=n^{-\nu/(2\nu+p)}\log^{1/2}n$}, the near-minimax rate for estimating a $p$-dimensional $\nu$-smooth function.  We will also assume that $f_0$ is 
 H\"{o}lder continuous, i.e. $\nu$-H\"{o}lder smooth with $0<\nu\leq 1$. The limitation $\nu\leq 1$ is an unavoidable consequence of  using step functions to approximate smooth $f_0$ and can be avoided with smooth kernel methods (\cite{linero2017bayesian}).
  
The statement \eqref{concentration} can be proved by verifying the following three conditions (suitably adapted from Theorem 4 of \cite{ghosal2007convergence}):

\begin{equation}\label{eq:entropy1}
\displaystyle \sup_{\varepsilon > \varepsilon_n} \log  N\left(\tfrac{\varepsilon}{36}; A_{{\varepsilon},1}\cap\mF_n; \|.\|_n\right) \leq n\,\varepsilon_n^2
\end{equation}
\begin{equation}\label{eq:prior1}
\displaystyle {\Pi(A_{\varepsilon_n,1})}\geq \e^{-d\,n\,\varepsilon_n^2}
\end{equation} 
\begin{equation}\label{eq:remain1}
\displaystyle \Pi(\mathcal{F} \backslash \mathcal{F}_n) = o(\e^{-(d+2)\,n\,\varepsilon_n^2})
\end{equation}
 for some $d>2$. In \eqref{eq:entropy1},  $N(\varepsilon; \Omega; d)$ is the $\varepsilon$-covering number of a set $\Omega$ for a semimetric $d$,  i.e. the minimal number of $d$-balls of radius $\varepsilon$ needed to cover a set $\Omega$. 
A few remarks are in place. The condition \eqref{eq:remain1} ensures that the prior zooms in on smaller, and thus more manageable, sets of models $\mathcal{F}_n$ by assigning only a small probability outside these sets. The condition \eqref{eq:entropy1} is known as ``the  entropy condition" and controls the  combinatorial richness of the approximating sets $\mathcal{F}_n$. 
Finally, condition \eqref{eq:prior1} requires that the prior charges an $\varepsilon_n$ neighborhood of the true function. 
The results of type \eqref{concentration} quantify not only the typical distance between a point estimator (posterior mean/median) and the truth, but also the typical spread of the posterior around the truth. 
These results are typically the first step towards further uncertainty quantification statements.

\vspace{-0.5cm}
\section{The Galton-Watson Process Prior}
The Galton-Watson (GW) process provides a mathematical representation of an evolving  population of individuals who  reproduce and die subject to laws of chance.
Binary tree partitions $\mT$ under the prior \eqref{eq:p_split} can be thought of as  realizations of such a branching process.
Below, we review some terminology of branching processes and link them to Bayesian CART.

We denote with $Z_t$ the population size at time $t$ (i.e. the number of nodes in the $t^{th}$ layer of the tree).
The process starts at time $t=0$ with a single individual, i.e. $Z_0=1$. At  time $t$, each member is split {\sl independently} of one another into a random number of offsprings. 
Let $Y_{ti}$ denote the number of offsprings produced by the $i^{th}$ individual of the $t^{th}$ generation and let $g_t(s)$ be the associated probability generating function.
A binary tree is obtained when
each node has either {\sl zero} or {\sl two} offsprings, as characterized by
\begin{equation}\label{pgf}
g_t(s)=s^0\P(Y_{t1}=0)+s^2\P(Y_{t1}=2),\quad 0\leq s\leq 1.
\end{equation}
Homogeneous GW process is obtained when all $Y_{ti}$'s are iid.
A {\sl heterogeneous} GW process is a generalization where the offspring distribution is allowed to vary according to the generations,
i.e. the variables $Y_{ti}$ are independent but {\sl non-identical}. The Bayesian CART prior of \cite{chipman1998bayesian}  can be framed as a heterogeneous GW process, where the probability of splitting a node (generating offsprings) depends on the depth $t$ of the node in the tree. 
In particular, using \eqref{eq:p_split} one obtains for $0<\alpha<1$ and $\gamma>0$
\begin{equation}\label{split_prob}
\P(Y_{t1}=2)=1-\P(Y_{t1}=0)=\frac{\alpha}{(1+t)^{\gamma}}.
\end{equation}
The population size at time $t$ satisfies $Z_{t}=\sum_{i=1}^{Z_{t-1}} Y_{ti}$ and its expectation can be written as
$$
\E Z_t=\E[\E (Z_t  \mid Z_{t-1})]=(2\alpha)^t[(t+1)!]^{-\gamma}.
$$
Since $\E Z_1<1$ under \eqref{split_prob}, the process is subcritical and thereby it dies out with probability one. This means that the random sequence $\{Z_t\}$ consists of zeros for all but a finite number of  $t$'s. The overall number of nodes in the tree (all ancestors in the family pedigree)  
\begin{equation}\label{eq:pop_size}
X=\sum_{t=0}^\infty Z_t
\end{equation}
is thus finite with probability one. The number of {\sl leaves} (bottom nodes) $K$ can be related to $X$ through
\begin{equation}\label{eq:progeny}
K=(X+1)/2
\end{equation}
and satisfies
\begin{equation}\label{eq:Tex_X}
T_{ex}+1\leq K\leq 2^{T_{ex}},
\end{equation}
 where $T_{ex}=\mathrm{min}\{t:Z_t=0\}$ is the time of extinction. 
In \eqref{eq:Tex_X}, we have used the fact that  $T_{ex}-1$ is the depth of the tree, where
the  lower bound is obtained with asymmetric trees with only one node split at each level and the upper bound is obtained with  symmetric full binary trees (all nodes are split at each level).

Regularization is an essential remedy against overfitting and Bayesian procedures have a natural way of doing so
through a prior. In the context of trees, the key regularization element is the prior on the number of bottom leaves $K$,  
which is completely characterized by  the distribution of total progeny $X$ via \eqref{eq:progeny}.
Using this connection, in the next section we study the tail bounds of the distribution $\pi(K)$ implied by the Galton-Watson process.

\section{Bayesian Tree Regularization}
If we knew $\nu$, the optimal (rate-minimax) choice of the number of tree leaves would be $K\asymp K_\nu=n^{p/(2\nu+p)}$ (RP17). When $\nu$ is unknown,  one can do almost as well (sacrificing only a log factor in the convergence rate) using a suitable prior $\pi(K)$. As noted by \cite{coram2006consistency}, the tail behavior of $\pi(K)$ is critical for controlling  the vulnerability/resilience to overfitting. The anticipation is  that with smooth $f_0$, more rapid posterior concentration takes place when $\pi(K)$ has a heavier tail. However, too heavy tails make it easier to overfit when the true function is less smooth. To achieve an equilibrium, \cite{denison1998bayesian} suggest the Poisson distribution (constrained to $\mathbb{N}\backslash\{0\}$), which satisfies 
\begin{equation}\label{eq:tail_bound}
\P(K>k)\lesssim \e^{-C_K\,k\log k}\quad\text{for some $C_K>0$}.
\end{equation}
Under this prior, one can show that $\P(K>C\, K_{\nu}\C\Y^{(n)})\rightarrow 0$ in $\P_{f_0}^{(n)}$ probability (RP17). The posterior thus does not overshoot the oracle $K_{\nu}$ too much.

In the BART prior,  the distribution $\pi(K)$ is implicitly defined through the GW process rather than directly through \eqref{eq:tail_bound}. 
In order to see whether BART induces a sufficient amount of regularization, we first need to obtain a tail bound of $\pi(K)$ under the   GW process and show that it decays fast enough. 
One seemingly simple remedy would be to set $\gamma=0$ (which coincides with the homogeneous GW case) and $\alpha=c/n$ with some $c>0$. Standard branching process theory then implies
$
\Pi(K>k)\lesssim \e^{-C_K\,k\log n}.
$
This prior is more aggressive than \eqref{eq:tail_bound}. Moreover, letting the split probability $p_{split}(\Omega_k)$ decay with sample size is counterintuitive.  By choosing $\alpha=c$, on the other hand, one obtains  $\Pi(K>k)\lesssim \e^{-C_K\,k}$ which is not  aggressive enough.

While the homogeneous GW processes have been studied quite extensively,  the  literature on tail bounds  for {\sl heterogeneous}  GW processes (for when $\gamma\neq 0$) has been relatively deserted.
We first review one interesting approach in the next section and then come up with a new bound in Section \ref{sec:walks}.

\subsection{Tail Bounds \`{a} la Agresti}
\cite{agresti1975extinction} obtained bounds for the extinction time distribution of  branching processes with independent non-identically distributed environmental random variables $Y_{ti}$. 

\begin{theorem}{\citep{agresti1975extinction}} 
Consider  the {heterogeneous Galton-Watson branching process} with offspring p.g.f.'s $\{g_j(s); j \ge 0\}$ satisfying $g_j^{''}(1) < \infty$ for $j \ge 0$. Denote $P_t=\prod_{j=0}^{t-1} g_j'(1)$. 
Then
\begin{equation}\label{eq:bound}
 \P(T_{ex} > t)\leq \left[P_t^{-1}+\frac{1}{2} \sum_{j=0}^{t-1} (g_j^{''}(0)/g_j'(1)P_{j+1})\right]^{-1}.
\end{equation}
\end{theorem}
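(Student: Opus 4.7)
The plan is to recognize that $\P(T_{ex} > t) = \P(Z_t > 0) = 1 - h_t(0)$, where $h_t = g_0\circ g_1 \circ \cdots \circ g_{t-1}$ is the probability generating function of $Z_t$, obtained by iterating $h_{j+1}(s) = h_j(g_j(s))$ starting from $h_0(s) = s$. To expose the compositional structure I would introduce the backward sequence $a_t = 0$ and $a_j = g_j(a_{j+1})$ for $j = t-1,\dots,0$, so that $a_0 = h_t(0)$, and work with its reciprocals $b_j = 1/(1-a_j)$. Then $1/\P(T_{ex}>t) = b_0$, and the task becomes one of obtaining a lower bound on $b_0$.

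The heart of the argument will be a single-step reciprocal inequality: for any offspring pgf $g$ with $g''(1)<\infty$ and any $s\in[0,1)$,
$$
\frac{1}{1-g(s)} \;\geq\; \frac{1}{g'(1)(1-s)} + \frac{g''(0)}{2(g'(1))^2}.
$$
I would prove this by Taylor expansion around $s=1$: the Lagrange remainder gives $1-g(s) = g'(1)(1-s) - \tfrac{1}{2}g''(\xi)(1-s)^2$ for some $\xi\in(s,1)$. Combining this with the convexity bound $1-g(s)\leq g'(1)(1-s)$ and with $g''(\xi)\geq g''(0)$ (valid because $g''$ has non-negative power-series coefficients, hence is non-decreasing on $[0,1]$), a direct algebraic manipulation of $\tfrac{1}{1-g(s)} - \tfrac{1}{g'(1)(1-s)}$ yields the stated lower bound.

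Applying this inequality to $g=g_j$ at $s=a_{j+1}$ turns the recursion $a_j = g_j(a_{j+1})$ into the linear recursion $b_j \geq b_{j+1}/g_j'(1) + g_j''(0)/(2(g_j'(1))^2)$. Unrolling from $b_t=1$ down to $b_0$ telescopes into
$$
b_0 \;\geq\; \frac{1}{P_t} + \sum_{j=0}^{t-1}\frac{g_j''(0)}{2\,(g_j'(1))^2\,\prod_{k=0}^{j-1} g_k'(1)} \;=\; \frac{1}{P_t} + \frac{1}{2}\sum_{j=0}^{t-1}\frac{g_j''(0)}{g_j'(1)\,P_{j+1}},
$$
after simplifying $(g_j'(1))^2\prod_{k<j} g_k'(1) = g_j'(1)\,P_{j+1}$. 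Inverting then gives \eqref{eq:bound}. The only delicate step is the one-step inequality—specifically, keeping the sign of the Taylor remainder straight and invoking the monotonicity of $g''$ on $[0,1]$; the composition bookkeeping and the reshuffling of the $P_j$'s are then routine.
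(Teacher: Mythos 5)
The paper does not prove this statement---it is quoted verbatim from \cite{agresti1975extinction} as a known result---so there is no internal proof to compare against. Your reconstruction is correct and is essentially Agresti's own argument: the one-step reciprocal bound $\tfrac{1}{1-g(s)} \geq \tfrac{1}{g'(1)(1-s)} + \tfrac{g''(0)}{2(g'(1))^2}$ (which follows exactly as you say, since $1-g(s)=g'(1)(1-s)-\tfrac12 g''(\xi)(1-s)^2$ with $g''(\xi)\geq g''(0)$ by monotonicity of $g''$ on $[0,1]$, combined with $1-g(s)\leq g'(1)(1-s)$), applied along the backward composition $h_t = g_0\circ\cdots\circ g_{t-1}$ and telescoped using $(g_j'(1))^2 P_j = g_j'(1)P_{j+1}$, which reproduces \eqref{eq:bound} exactly.
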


Using this result, we can obtain a tail bound on the extinction time under the Bayesian CART prior.

\begin{corollary}
For the {heterogeneous Galton-Watson branching process} with offspring p.g.f.'s \eqref{pgf} with \eqref{split_prob} we have
\begin{equation}\label{eq:tex}
 \P(T_{ex}>t) < C_0\left(\frac{t^{\gamma}}{2\alpha \e^\gamma}\right)^{-t}
 \end{equation}
 for a positive constant $C_0$ that depends on $\alpha$ and  $\gamma$.
\end{corollary}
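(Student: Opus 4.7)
The plan is to specialize Agresti's bound \eqref{eq:bound} to the p.g.f.\ \eqref{pgf} with splitting probabilities \eqref{split_prob} and then reduce the bracketed sum to a single dominant term that can be evaluated via Stirling.

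First I would compute the relevant derivatives at the BART p.g.f.\ $g_j(s)=(1-p_j)+p_j s^2$ with $p_j=\alpha/(1+j)^\gamma$. A direct differentiation gives $g_j'(1)=2p_j$ and $g_j''(s)\equiv 2p_j$, so in particular $g_j''(0)/g_j'(1)=1$. The telescoping product becomes
\[
P_t=\prod_{j=0}^{t-1} g_j'(1)=\prod_{j=0}^{t-1}\frac{2\alpha}{(1+j)^\gamma}=\frac{(2\alpha)^t}{(t!)^\gamma}.
\]
Substituting into \eqref{eq:bound} and using $g_j''(0)/g_j'(1)=1$ yields
\[
\P(T_{ex}>t)\leq \left[P_t^{-1}+\tfrac{1}{2}\sum_{k=1}^{t} P_k^{-1}\right]^{-1},\qquad P_k^{-1}=\frac{(k!)^\gamma}{(2\alpha)^k}.
\]

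Next I would show that the bracketed expression is of the same order as the last term $P_t^{-1}$. The ratio test gives $P_{k+1}^{-1}/P_k^{-1}=(k+1)^\gamma/(2\alpha)$, which exceeds $2$ once $k\geq k_0:=\lceil(4\alpha)^{1/\gamma}\rceil$. Thus for $t\geq k_0$ the tail of the sum $\sum_{k=k_0}^{t}P_k^{-1}$ is geometric with ratio at least $2$, bounding it by $2P_t^{-1}$; the finitely many early terms contribute at most an additive constant which is also dominated by $P_t^{-1}$ for $t$ large. Consequently there is a constant $c>0$ (depending on $\alpha,\gamma$) with
\[
P_t^{-1}+\tfrac{1}{2}\sum_{k=1}^{t}P_k^{-1}\geq c\,\frac{(t!)^\gamma}{(2\alpha)^t},
\]
so $\P(T_{ex}>t)\leq c^{-1}(2\alpha)^t/(t!)^\gamma$.

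Finally I would convert the factorial into the form advertised in \eqref{eq:tex} via Stirling: $t!\geq c_1 (t/e)^t$ for some absolute $c_1>0$, so $(t!)^\gamma\geq c_1^\gamma (t^\gamma/e^\gamma)^t$ and
\[
\frac{(2\alpha)^t}{(t!)^\gamma}\leq c_1^{-\gamma}\left(\frac{2\alpha\,e^\gamma}{t^\gamma}\right)^t=C_0\left(\frac{t^\gamma}{2\alpha\,e^\gamma}\right)^{-t}
\]
after absorbing $c^{-1}c_1^{-\gamma}$ into the constant $C_0=C_0(\alpha,\gamma)$. The only nontrivial step is the second one: the main obstacle is arguing that Agresti's bracketed sum is dominated by its largest term, which requires separating the tail where the geometric ratio dominates from the finitely many initial terms where $(k+1)^\gamma<4\alpha$. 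Once that reduction is in hand, the rest is bookkeeping with Stirling.
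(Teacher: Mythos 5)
Your proposal is correct and follows essentially the same route as the paper: specialize Agresti's bound to the p.g.f.\ \eqref{pgf}, compute $P_t^{-1}=(t!)^\gamma/(2\alpha)^t$, lower-bound the bracketed expression by (a constant times) $P_t^{-1}$, and finish with Stirling. Your geometric-ratio analysis of the sum is harmless but unnecessary for the stated upper bound, since the bracket is trivially at least its first term $P_t^{-1}$ and the nonnegative sum can simply be dropped; the paper likewise only uses that the sum reinforces, rather than degrades, the leading term.
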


\begin{proof}
We have $g_0(s)=s$ and for $j\ge 1$
\begin{equation*}
\begin{split}
g_j(s) & =1-\alpha(1+j)^{-\gamma}+s^2\alpha(1+j)^{-\gamma},\\
 g_j^{'}(s) &= 2s\alpha (1+j)^{-\gamma}, \\
 g_j^{''}(s) &= 2\alpha (1+j)^{-\gamma}.
\end{split}
\end{equation*}
Thus we have $g_0'(1)=1$ and $g_j^{'}(1)=g_j^{''}(0)=2\alpha (1+j)^{-\gamma}$  for $j\ge 1$. Then we can write{
\begin{equation}\label{Pn_inverse}
 P_t^{-1} =\frac{\prod_{i=0}^{t-1} (1+i)^{\gamma}}{(2\alpha)^{t}} = \frac{(t!)^{\gamma}}{(2\alpha)^{t}}
\end{equation}}
and 
\begin{equation*}
 \sum_{j=0}^{t-1} (g_j^{''}(0)/g_j'(1)P_{j+1})=\sum_{j=0}^{t-1}\frac{1}{P_{j+1}} =\sum_{j=1}^{t}\frac{(j)!^{\gamma}}{(2\alpha)^{j}}> \frac{(t!)^\gamma}{(2\alpha)^t}.
\end{equation*}
Using \eqref{Pn_inverse} and the fact that $t!>(t/\e)^t\e$, we can upper-bound the right hand side of \eqref{eq:bound} with $C_0[t^\gamma/(\e^\gamma 2\alpha)]^{-t}$. 
\end{proof}
\begin{remark}
A simpler bound on the extinction time can be obtained using Markov's inequality as follows:
$
\P(T_{ex}>t)=\P(Z_t\geq 1)\leq \E Z_t\leq (2\alpha)^t[(t+1)!]^{-\gamma}.
$
\end{remark}
Using the upper bound in \eqref{eq:Tex_X} we immediately conclude that 
$$
\P(K>k)<\P(T_{ex}>\log_2 k)\leq C_0\left(\frac{\log_2^{\gamma}k}{2\alpha \e^\gamma}\right)^{-\log_2k}.
$$
{This decay, however, is not fast enough as we would ideally like to show \eqref{eq:tail_bound}.
We try a bit different approach in the next section.}

\subsection{Trees  as Random Walks}\label{sec:walks}
There is a curious connection between branching processes and random walks (see e.g. \cite{dwass1969total}). 
Suppose that a binary tree $\mT$ is revealed in the following node-by-node  exploration process:
one exhausts all nodes in generation  $d$ before revealing nodes in generation $d+1$. 
Namely, nodes are  implicitly numbered (and explored) according to their priority and this is done in a top/down manner according to their layer and a left-to-right manner within each layer  (i.e. $\Omega_0$ is the root node and, if split,  $\Omega_1$ and $\Omega_2$  are the two children (left and right) etc.)

Nodes that are waiting to be explored can be organized in a queue $Q$.  We say that a node is  {\sl active} at time $t$ if it resides in a queue. 
Starting with one active node at $t=0$ (the root node), at each time  $t$ we deactivate (remove from $Q$) the node with the highest priority (lowest index) and add its children to $Q$. 
Letting $S_t$ be the number of active nodes at time $t$, one finds that $\{S_t\}$ satisfies
$$
S_t=S_{t-1}-1+Y_{t}, \quad t\geq 1,
$$
and $S_0=1$, where $Y_{t}$ are sampled from the offspring distribution. For the {\sl homogeneous} GW process,
$S_t$ is an actual random walk where $Y_t$ are iid with a probability generating function \eqref{pgf}. 
 For the {\sl heterogeneous} GW process, $S_t$ is not strictly a random walk in the sense that $Y_{t}'s$ are not iid. Nevertheless, using this construction one can see that the total population $X$ equals the first time the queue is empty:
$$
X=\mathrm{min}\{t\geq0: S_t=0\}.
$$
Linking Galton-Watson trees to random walk excursions in this way, one can obtain a useful tail bound of the distribution of  the population size $X$.
While perhaps not surprising, we believe that this bound is new, as we could not find any equivalent in the literature.

\begin{lemma}\label{lemma:tail}
Denote by $X$ the total population size \eqref{eq:pop_size} arising from the heterogeneous {Galton-Watson} process. Then we have for any $c>0$
\begin{equation}\label{eq:tail}
\P(X>k)  \leq\e^{-k\,c+(\e^{2c}-1)\mu},
\end{equation}
where $\mu=\sum_{i=1}^{k}p_i$ and $p_i=p_{split}(\Omega_i)$, where nodes $\Omega_i$ are ordered in a top-down left-to-right fashion.
\end{lemma}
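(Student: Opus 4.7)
The plan is to exploit the random walk representation and apply a standard Chernoff-type bound, the only subtlety being that the increment probabilities $p_i$ are predictable (random given the past) rather than deterministic.

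First, I would reduce the tail event for $X$ to a one-sided event for $S_k$. Since $X = \min\{t \geq 0 : S_t = 0\}$, the event $\{X > k\}$ means the queue is still non-empty after $k$ explorations, i.e. $S_k \geq 1$. Hence
\begin{equation*}
\P(X > k) \leq \P(S_k \geq 1) \leq \e^{-c}\, \E\bigl[\e^{c S_k}\bigr] \qquad \text{for any } c > 0,
\end{equation*}
by Markov's inequality applied to $\e^{c S_k}$. Using $S_k = 1 - k + \sum_{i=1}^k Y_i$, this becomes
\begin{equation*}
\P(X > k) \leq \e^{-ck}\, \E\!\left[\e^{c \sum_{i=1}^k Y_i}\right].
\end{equation*}

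Next, I would bound the moment generating function $\E[\e^{c\sum Y_i}]$ by peeling off one increment at a time. Let $\mathcal{F}_t = \sigma(Y_1, \ldots, Y_t)$ be the filtration generated by the exploration; then the depth of the $(t+1)$-th explored node, and hence $p_{t+1} = p_{split}(\Omega_{t+1})$, is $\mathcal{F}_t$-measurable. Because each individual reproduces independently of its siblings and ancestors given its position in the tree, the conditional offspring distribution is $\P(Y_{t+1} = 2 \mid \mathcal{F}_t) = p_{t+1}$ with complement $1 - p_{t+1}$, so
\begin{equation*}
\E\!\left[\e^{c Y_{t+1}} \,\middle|\, \mathcal{F}_t\right] = 1 + p_{t+1}(\e^{2c} - 1) \leq \exp\!\bigl(p_{t+1}(\e^{2c} - 1)\bigr),
\end{equation*}
where the last step uses the elementary inequality $1 + x \leq \e^x$. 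Iterating the tower property from $t = k-1$ down to $t = 0$ gives
\begin{equation*}
\E\!\left[\e^{c \sum_{i=1}^k Y_i}\right] \leq \E\!\left[\exp\!\Bigl((\e^{2c} - 1)\sum_{i=1}^k p_i\Bigr)\right] = \exp\bigl((\e^{2c} - 1)\mu\bigr),
\end{equation*}
with $\mu = \sum_{i=1}^k p_i$ (treated as the sum of the split probabilities along the first $k$ explored nodes, which in the ordering used in the statement is a fixed quantity once one adopts the convention that unexplored slots are assigned their nominal depth, or is handled as a deterministic upper bound since only the first $k$ depths matter).

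Combining the two displays yields
\begin{equation*}
\P(X > k) \leq \e^{-ck + (\e^{2c} - 1)\mu},
\end{equation*}
which is the stated inequality. The main obstacle is the predictable (rather than deterministic) nature of $p_i$: without the exploration ordering and the filtration argument, one could not factor the MGF. The random-walk ordering and the tower property are exactly what make this work, and the bound $1 + x \leq \e^x$ then converts the product into an exponential of the sum $\mu$.
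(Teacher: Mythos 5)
Your proof is correct and follows essentially the same route as the paper: reduce $\{X>k\}$ to $\{S_k\geq 1\}$ via the queue representation, apply a Chernoff bound to $\sum_{i=1}^k Y_i$, and use $1+x\leq \e^{x}$ to turn the product of per-node moment generating functions into $\e^{(\e^{2c}-1)\mu}$. You are in fact more careful than the paper, which writes $\E\,\e^{c\sum_i Y_i}=\prod_{i=1}^k[p_i\e^{2c}+1-p_i]$ as if the $Y_i$ were independent with deterministic $p_i$; your point that the depths, and hence the $p_i$, are only predictable with respect to the exploration filtration is exactly the subtlety being glossed over. The one loose step is the end of your iteration: with random $p_{t+1}$ the tower property does not telescope into $\E\exp\bigl((\e^{2c}-1)\sum_i p_i\bigr)$ in the way written (after one conditioning step the accumulated $\e^{p_k(\e^{2c}-1)}$ factor is not measurable with respect to the earlier $\sigma$-fields), and one cannot then simply drop the outer expectation; the clean fix, which you already hint at, is to replace each $p_{t+1}$ by the deterministic upper bound obtained from the minimal possible depth $\lfloor \log_2(t+2)\rfloor$ of the $(t+1)$-st explored node before exponentiating, so that the product of bounds is nonrandom --- this is precisely the reading of $\mu$ under which the paper later bounds $\mu<\sum_{d}\alpha 2^{d}/(1+d)^{\gamma}$.
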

\begin{proof}
For $k>0$, we can write
$$
\P(X>k)\leq \P(S_k>0)=\P\left(\sum_{i=1}^{k}Y_i>k-1\right),
$$
where $X$ is the number of all nodes (internal and external) in the tree and $Y_i$ has a two-point distribution characterized by $\P(Y_i=2)=1-\P(Y_i=0)=p_i$.
 Using the Chernoff bound, one deduces that for any $c>0$
{\begin{align*}
&\P\left(\sum_{i=1}^{k}Y_i>k-1\right)\leq \e^{-k\,c}\,\E\e^{c\,\sum_{i=1}^{k}Y_i}=\e^{-k\,c}\,\prod_{i=1}^{k}[p_i\e^{2c}+1-p_i]
\leq\e^{-k\,c+(\e^{2c}-1)\mu}
\end{align*}}
where $\mu=\sum_{i=1}^{k}p_i$. 
\end{proof}

The goal throughout this section has been to understand whether the Bayesian CART prior of \cite{chipman1998bayesian} yields \eqref{eq:tail_bound} for some $C_K>0$.
The prior assumes $p_i=\alpha/(1+d(\Omega_i))^\gamma$.
{Choosing $c=(\log k)/2$ in \eqref{eq:tail}, the right hand side will be smaller than $\e^{-a\, k\log k}$, for some suitable $0<a<1/2$,  as long as $\mu\leq(1/2-a)\log k$.
We note that 
$$
\mu=\sum_{i=1}^kp_i<\sum_{d=1}^{\lceil \log_2 k\rceil} \frac{\alpha}{(1+d)^\gamma}2^d.
$$} 
Because the split probability $p_i$ decreases only polynomially in depth of $\Omega_i$, this  {\sl is not enough} to ensure $\mu<(1/2-a)\log(k)$. The optimal decay,  however,  will be guaranteed if we instead choose 
{\begin{equation}\label{eq:new_psplit}
p_{split}(\Omega)\propto\alpha^{d(\Omega)}\quad\text{ for some $0<\alpha<1/2$}.
\end{equation}}
To conclude, from our considerations it is not clear that the Bayesian CART prior of \cite{chipman1998bayesian} has the optimal tail-bound decay. {The following Corollary certifies that the optimal tail behavior  can be obtained with a suitable modification of $p_{split}(\Omega)$. }
\begin{corollary}\label{corollary}
Under the Bayesian CART prior of \cite{chipman1998bayesian} with \eqref{eq:new_psplit}, we  obtain \eqref{eq:tail_bound}.
\end{corollary}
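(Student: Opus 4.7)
The plan is to feed the new split probability $p_{split}(\Omega)=\alpha^{d(\Omega)}$ into Lemma \ref{lemma:tail} and show that, thanks to the geometric decay in depth, the quantity $\mu=\sum_{i=1}^{k}p_i$ stays bounded by a universal constant when $2\alpha<1$. Once $\mu$ is under control, the Chernoff exponent $-kc+(\e^{2c}-1)\mu$ can be driven to $-\Theta(k\log k)$ by the standard choice $c=(\log k)/2$, and the translation from population size $X$ to leaf count $K$ follows from \eqref{eq:progeny}.

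First, I would invoke Lemma \ref{lemma:tail} with the new split probabilities. The nodes $\Omega_1,\dots,\Omega_k$ are listed in a top-down, left-to-right fashion, so among the first $k$ of them, at most $2^d$ sit at depth $d$ (since a binary tree has at most $2^d$ nodes in layer $d$). Therefore
\begin{equation*}
\mu=\sum_{i=1}^{k}\alpha^{d(\Omega_i)}\leq\sum_{d=0}^{\infty}2^d\alpha^d=\frac{1}{1-2\alpha}=:M,
\end{equation*}
where the geometric series converges precisely because we have imposed $0<\alpha<1/2$. This is really the whole point of the modified prior: polynomial decay $(1+d)^{-\gamma}$ in \eqref{eq:p_split} cannot compensate for the $2^d$ nodes per layer, whereas genuine geometric decay with rate strictly below $1/2$ can.

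Next, I would specialize the bound of Lemma \ref{lemma:tail} by taking $c=(\log k)/2$. This choice gives $\e^{2c}-1=k-1$, and substituting the uniform bound $\mu\leq M$ yields
\begin{equation*}
\P(X>k)\leq\exp\bigl\{-\tfrac{k}{2}\log k+(k-1)M\bigr\}\leq\exp\bigl\{-\tfrac{k}{4}\log k\bigr\}
\end{equation*}
for all $k$ large enough that $(k-1)M\leq (k/4)\log k$, and absorbing the finite set of small $k$ into a multiplicative constant if desired.

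Finally, the relation $K=(X+1)/2$ in \eqref{eq:progeny} gives $\{K>k\}=\{X>2k-1\}$, so
\begin{equation*}
\P(K>k)\leq\exp\bigl\{-\tfrac{2k-1}{4}\log(2k-1)\bigr\}\leq\e^{-C_K\,k\log k}
\end{equation*}
for a suitable $C_K>0$, which is exactly the desired tail bound \eqref{eq:tail_bound}. The only step requiring any real care is the bound on $\mu$: one has to recognize that in the exploration order used by Lemma \ref{lemma:tail} the multiplicity of nodes per depth is bounded by $2^d$, so that the condition $\alpha<1/2$ (rather than merely $\alpha<1$) is the natural threshold for geometric summability. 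Everything else is a direct application of the Chernoff bound already executed inside Lemma \ref{lemma:tail}.
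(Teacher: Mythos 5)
Your proof is correct and follows essentially the same route as the paper: the key observation in both is that with $p_{split}(\Omega)=\alpha^{d(\Omega)}$ and at most $2^d$ nodes per layer in the breadth-first ordering, $\mu$ is controlled by the geometric series $\sum_d (2\alpha)^d$ (convergent precisely because $\alpha<1/2$), after which the choice $c=(\log k)/2$ in Lemma \ref{lemma:tail} and the relation \eqref{eq:progeny} deliver \eqref{eq:tail_bound}. The only cosmetic difference is that you bound $\mu$ by an absolute constant rather than by the weaker threshold $(1/2-a)\log k$ that the paper's discussion requires, which only strengthens the conclusion.
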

\proof Follows from the considerations bove and from \eqref{eq:progeny}.

\begin{figure}
\centering
\scalebox{0.35}{\includegraphics{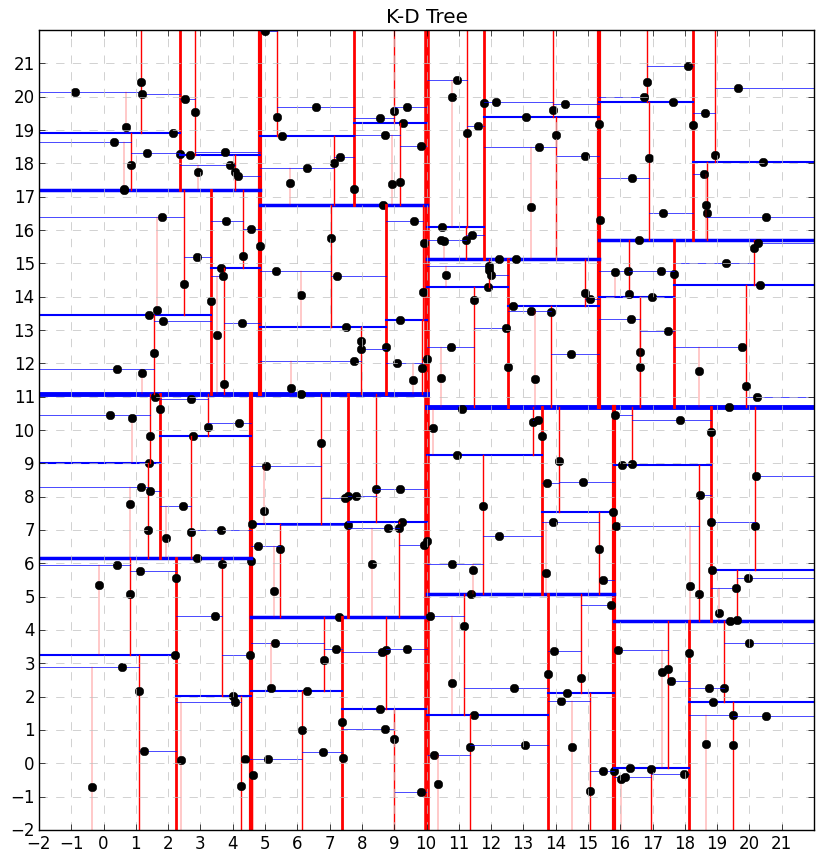}}
\caption{The $k$-$d$ trees in two dimensions at various resolution levels.}\label{fig:kd}
\end{figure}

\vspace{-0.5cm}
\section{Prior Concentration for BART}\label{sec:prior_conc}
One of the prerequisites for optimal posterior concentration \eqref{concentration} is optimal prior concentration (Condition \eqref{eq:prior1}). This condition ensures that there is enough prior support around the truth. It can be verified by constructing one approximating tree and by showing that it has enough prior mass. RP17 use the $k$-$d$ approximating tree (Remark 3.1), which is {a  balanced} full binary tree which partitions $[0,1]^p$ into nearly identical rectangles {(in sufficiently regular designs). This tree can be regarded as the most regular partition that can be obtained by splitting at observed values. A formal definition of the $k$-$d$ tree is below and a few two-dimensional examples\footnote{Source: \texttt{https://salzis.wordpress.com/2014/06/28/kd-tree-and-nearest-neighbor-nn-search-2d-case/}} (at various resolution levels)  are in Figure \ref{fig:kd}.}

\begin{definition}($k$-$d$ tree partition)
The $k$-$d$ tree partition is constructed by cycling over coordinate directions $\{1,\dots,p\}$, where all nodes at the same level are split along the same axis. For a given direction $j\in\{1,\dots,p\}$, each  internal node, say $\Omega_k$, will be split  at a median of the point set (along the $j^{th}$ axis). Each split thus roughly halves the number of points inside the cell. 
\end{definition}
After $s$ rounds of splits on each variable, all  $K$ terminal nodes have at least $\lfloor n/K \rfloor$ observations, where $K=2^{s\, p}$. The $k$-$d$ tree partitions are thus balanced in light of Definition 2.4 of \cite{rockova2017posterior} (i.e. have roughly the same number of observations inside).
The $k$-$d$ tree construction is instrumental in establishing optimal prior/posterior concentration.  Lemma 3.2 of RP17 shows that there {\sl exists} a step function supported by a $k$-$d$ partition  that safely approximates $f_0$ with an error smaller than a constant multiple of the minimax rate.  The approximating $k$-$d$ tree partition, denoted with $\wh\mT$, has $\wh K$ steps where {$\wh K\asymp n\varepsilon_n^2/\log n$ when $p\lesssim \log^{1/2}n$ (as shown in Section 8.3 of  RP17 and detailed in the proof of Theorem \ref{thm:weak_learner})}.

In order to complete the proof of posterior concentration for the Bayesian CART under {the Galton-Watson} process prior, we need to show that  $\pi(\wh\mT)\geq \e^{-c_1n\varepsilon_n^2}$ for some $c_1>0$.  This is verified in the next lemma.

\begin{lemma}\label{lemma:kdtree}
Denote with $\wh\mT$ the $k$-$d$ tree partition described above. {Assume the heterogeneous Galton-Watson process tree prior with 
$
p_{split}(\Omega_k) \propto \alpha^{d(\Omega_k)}
$ }
for some suitable $1/n\leq\alpha<1/2$. Assume $p\lesssim \log^{1/2} n$. Then we have for some suitable $c_1>0$
$$
\pi(\wh\mT)\geq \e^{-c_1\,n\,\varepsilon_n^2}.
$$
\end{lemma}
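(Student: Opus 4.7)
The plan is to factor $\pi(\wh\mT)$ into the probability of the tree \emph{shape} (which nodes split and which stay leaves) and the probability of the \emph{splitting rules} (which coordinate and which cut point) assigned at each internal node, then bound each piece separately. Because $\wh\mT$ is a balanced full binary tree with $\wh K = 2^{sp}$ leaves all at the same depth $D = \log_2 \wh K = sp$, its shape is completely prescribed: every node at depths $0, 1, \ldots, D-1$ splits and every node at depth $D$ stays a leaf. This rigid structure makes the bookkeeping tractable.

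For the shape probability, I would multiply $p_d = p_{split}(\Omega) \asymp \alpha^d$ over the $2^d$ internal nodes at each depth $d<D$, and $1-p_D$ over the $2^D = \wh K$ leaves at depth $D$. Taking logs, the dominant contribution is $(\log \alpha)\sum_{d=0}^{D-1} d\, 2^d = \Theta(D\,2^D \log \alpha) = -\Theta(\wh K \log \wh K)$, while the leaf contribution $\wh K \log(1-p_D) \approx -\wh K\, \alpha^D = -(2\alpha)^D$ is bounded by a constant (this is exactly where $\alpha < 1/2$ enters). For the rule probability, each of the $\wh K - 1$ internal nodes contributes at least $1/(np)$ (uniform choice among $p$ coordinates and at most $n$ observed split values), yielding $\pi_{\text{rules}}(\wh\mT) \geq (np)^{-(\wh K - 1)}$.

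Combining,
$$
\log \pi(\wh\mT) \;\gtrsim\; -C\,\wh K \log \wh K \;-\; \wh K \log(np)
$$
for a constant $C$ depending on $\alpha$. To finish, I plug in $\wh K \asymp n\varepsilon_n^2/\log n$ with $\varepsilon_n = n^{-\nu/(2\nu+p)} \log^{1/2} n$ and use the hypothesis $p \lesssim \log^{1/2} n$. Then $\log \wh K$ and $\log(np)$ are both of order $\log n$, so each of $\wh K \log \wh K$ and $\wh K \log(np)$ reduces to a constant multiple of $n \varepsilon_n^2$, giving $\log \pi(\wh\mT) \geq -c_1\, n\,\varepsilon_n^2$ as required.

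The step requiring the most care, rather than a genuine obstacle, is verifying $\log \wh K \asymp \log n$: one uses $n\varepsilon_n^2/\log n = n^{p/(2\nu+p)}$ so that $\log \wh K = \tfrac{p}{2\nu+p} \log n + O(\log\log n)$, which is a constant fraction of $\log n$ whenever $p$ is bounded away from zero; the condition $p \lesssim \log^{1/2} n$ then also guarantees that the cruder factor $\log(np)$ does not inflate beyond $\log n$. Everything else is routine arithmetic with the sum $\sum_{d=0}^{D-1} d\,2^d = (D-2)2^D + 2$.
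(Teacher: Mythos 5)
Your proposal is correct and follows essentially the same route as the paper: lower-bound $\pi(\wh\mT)$ by the probability of the prescribed balanced shape (all nodes up to depth $D-1$ split, all depth-$D$ nodes stop) times the probability $(np)^{-(\wh K-1)}$ of the prescribed splitting rules, then absorb everything into $\e^{-c_1 n\varepsilon_n^2}$ using $\wh K\asymp n\varepsilon_n^2/\log n$ and $p\lesssim\log^{1/2}n$. One substantive difference in the bookkeeping is worth noting. You account for the depth-dependence of $p_{split}$ exactly, obtaining a shape exponent $\log\alpha\sum_{d=0}^{D-1}d\,2^d\asymp -\wh K\log\wh K\cdot\log(1/\alpha)$, whereas the paper's displayed bound writes the split contribution as $\prod_{d=0}^{D-1}\alpha^{2^d}=\alpha^{\wh K-1}$, i.e.\ an exponent $\wh K-1$ rather than $\sum_d d\,2^d$; your version is the faithful one for the prior $p_{split}(\Omega)\propto\alpha^{d(\Omega)}$. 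The consequence is that your (correct) bound still closes, but only because $\log(1/\alpha)=O(1)$ — it establishes the lemma for a fixed constant $\alpha\in(0,1/2)$, which is all that ``some suitable $\alpha$'' requires, while it would \emph{not} give $\e^{-c_1 n\varepsilon_n^2}$ at the endpoint $\alpha\asymp 1/n$ (there the exponent inflates to $n\varepsilon_n^2\log n$), whereas the paper's looser accounting nominally covers the whole range $1/n\leq\alpha<1/2$. Your treatment of the leaf term, bounding $\wh K\log(1-\alpha^{D})\approx-(2\alpha)^{D}$ by a constant via $\alpha<1/2$, is also slightly sharper than the paper's crude $(1-\alpha)^{\wh K}\geq\e^{-\wh K\log 2}$, though both suffice.
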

\begin{proof}
 By construction, the $k$-$d$ tree $\wh\mT$ has $\wh K=2^{p\times s}$ leaves and $p\times  s$ layers for some $s\in\mathbb{N}$ where $p$ is the number of predictors.
In addition, the $k$-$d$ tree  is complete and balanced (i.e. every layer $d$, including the last one,  has the maximal number $2^d$ of nodes). Since there are {$\wh K-1$} internal nodes and at least $1/(p\,n)$  splitting rules for each internal node,  we have
{\begin{align*}
\pi(\wh\mT)&\geq\frac{(1-\alpha^{s\,p})^{\wh K}}{(p\,n)^{\wh K-1}}\prod_{d=0}^{\log_2\wh K-1}\alpha^{2^d}
\geq\frac{(1-\alpha^{s\,p})^{\wh K}}{(p\,n)^{\wh K-1}}\alpha^{\wh K-1}\\
&\geq[\alpha(1-\alpha)]^{\wh K}\left(\frac{1}{p\,n}\right)^{\wh K-1}>\e^{-\wh K\log (2n)-(\wh K-1)\log (p\,n)}.
\end{align*}}
Since $p\lesssim \log^{1/2}n$ and {$\wh K\asymp n\,\varepsilon_n^2/\log n$} we can lower-bound the above with $\e^{-c_1\,n\varepsilon_n^2}$ for some $c_1>0$. 
\end{proof}

{For the actual BART method (similarly as in Theorem 5.1 of RP17), one needs to find an approximating {\sl tree ensemble} and show that it  has enough prior support. 
The approximating ensemble can be found in Lemma 10.1 of RP17 and  consists of $\wh\mE=\{\wh\mT_1,\dots,\wh\mT_T\}$ tree partitions obtained by chopping of branches of $\wh \mT$. {The number of trees $T$ is fixed and the trees $\mT_t$ will not overlap much when $1\leq T\leq \wh K/2$. The default BART choice $T=200$ safely satisfies this as long as $p>9$.} The little trees $\wh\mT_t$ have $\wh K^t$ leaves and  satisfy $\log_2\wh K+1\leq \wh K^t\leq \wh K$ (depending on the choice of $T$). Using Lemma \ref{lemma:kdtree} and the fact that the trees are independent a-priori (from \eqref{eq:joint_prior}) and that $T$ is fixed, we then obtain
\begin{align*}
\pi(\wh\mE)&\geq \e^{-\sum_{t=1}^T[\wh K^t\log 2n+(\wh K^t-1)\log (p\,n)]}\\
&>\e^{-T\wh K\log 2n-T(\wh K-1)\log(p\,n)}>\e^{-c_2\,n\varepsilon_n^2}
\end{align*}
for some $c_2>0$.
}
The BART prior thus concentrates enough mass around the  truth. Condition \eqref{eq:prior1} also requires verification that the prior on jump sizes concentrates around the forest sitting on $\wh\mE$. This follows directly from Section 9.2 of RP17. We detail the steps in the proof of Theorem \ref{thm:weak_learner}.
\vspace{-0.5cm}
\section{Posterior Concentration for BART}
We now have all the ingredients needed to state the posterior concentration result for BART. The result is {\sl different} from Theorem 5.1 of RP17 because here we (a) assume that $T$ is fixed, (b) assume the branching process prior on $\mT$ and (c) we do not have subset selection uncertainty. 
{ We will treat the design  as fixed  and  {\sl regular}  according to  Definition 3.3 of RP17. Moreover, the BART prior support will be restricted to $\delta$-valid ensembles with $\delta\geq1$.}

\begin{theorem}\label{thm:weak_learner}(Posterior Concentration for BART)
Assume that $f_0$ is $\nu$-H\"{o}lder continuous with  {$0<\nu\leq 1$} where $\|f_0\|_\infty\lesssim \log^{1/2} n$. Assume a {\sl regular} design $\{\x_i\}_{i=1}^n$ where  $p\lesssim \log^{1/2}n$. Assume the BART prior with $T$ fixed  and  with $p_{split}(\Omega_t)=\alpha^{d(\Omega_t)}$ for $1/n\leq \alpha<1/2$.   With $\varepsilon_n=n^{-\alpha/(2\alpha+p)}\log^{1/2} n$ we have 
 {\begin{equation*}
\Pi\left( f_{\mE,\B} \in \mathcal{F}: \|f_0 - f_{\mE,\B}\|_n > M_n\,\varepsilon_n \mid \Y^{(n)}\right) \to 0
\end{equation*}}
 for any $M_n \to \infty$  in $\P_{f_0}^{(n)}$-probability, as $n,p \to \infty$. 
\end{theorem}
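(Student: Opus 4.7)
The plan is to verify the three sufficient conditions \eqref{eq:entropy1}, \eqref{eq:prior1}, \eqref{eq:remain1} from Section 3, mirroring the strategy of Theorem 5.1 of RP17 but with the heterogeneous Galton--Watson prior (with the modified split probability \eqref{eq:new_psplit}) replacing the conditionally uniform tree prior of RP17, and with $T$ fixed so that no union bound over $T$ is required. I would define the sieve
$$
\mathcal{F}_n = \left\{f_{\mE,\B}\in\mathcal{F}:\ \sum_{t=1}^T K_t \leq K_n,\ \max_{t,k}|\beta_{tk}|\leq B_n\right\},
$$
where $K_n$ is a large constant multiple of $n\varepsilon_n^2/\log n$ and $B_n = C\sqrt{\log n}$. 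The $\delta$-validity restriction (with $\delta\geq 1$) is already encoded in the prior support.

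\textbf{Prior concentration \eqref{eq:prior1}.} Section \ref{sec:prior_conc} already yields $\pi(\wh\mE)\geq \e^{-c_2 n\varepsilon_n^2}$ for the $k$-$d$ tree ensemble $\wh\mE$ of Lemma 10.1 of RP17, together with an explicit step-function $f_{\wh\mE,\wh\B}$ satisfying $\|f_{\wh\mE,\wh\B}-f_0\|_n \leq c\,\varepsilon_n/2$. Conditionally on $\wh\mE$, and exactly as in Section 9.2 of RP17, the triangle inequality reduces the event $\{\|f_{\mE,\B}-f_0\|_n\leq \varepsilon_n\}$ to an $\|\cdot\|_\infty$-box event on the independent Gaussian jumps $\beta_{tk}\sim\mathcal{N}(0,1/T)$ around the bounded target $\wh\beta_{tk}\in[-\|f_0\|_\infty,\|f_0\|_\infty]$. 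Since $\wh K\asymp n\varepsilon_n^2/\log n$ and $\|f_0\|_\infty\lesssim\sqrt{\log n}$, Gaussian anti-concentration gives the conditional lower bound $\e^{-c_3 n\varepsilon_n^2}$, and multiplying with the tree-prior bound yields \eqref{eq:prior1}.

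\textbf{Sieve complement \eqref{eq:remain1}.} Corollary \ref{corollary} gives $\pi(K_t > k) \leq \e^{-C_K k\log k}$ per tree. A union bound across the $T$ fixed trees yields
$$
\pi\left(\sum_{t=1}^T K_t > K_n\right) \leq T\,\e^{-C_K (K_n/T)\log(K_n/T)},
$$
which is $o(\e^{-(d+2)n\varepsilon_n^2})$ once the constant in $K_n\asymp n\varepsilon_n^2/\log n$ is taken sufficiently large. The complementary event for the step heights is controlled by the Gaussian tail $\pi(\max_{t,k}|\beta_{tk}|>B_n)\leq T\,K_n\,\e^{-B_n^2 T/2}$, which is negligible for $C$ large enough in $B_n=C\sqrt{\log n}$.

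\textbf{Entropy \eqref{eq:entropy1}.} Within $\mathcal{F}_n$, for each prescribed leaf-count vector $(K_1,\dots,K_T)$ with $\sum_t K_t\leq K_n$, the number of binary tree topologies is at most $4^{K_t}$, the number of split-variable and split-location assignments is bounded by $(p\,n)^{K_t-1}$, and an $\tfrac{\varepsilon}{72 T\sqrt{K_n}}$-net in $\|\cdot\|_\infty$ on $[-B_n,B_n]^{K_t}$ induces an $\tfrac{\varepsilon}{36}$-net in $\|\cdot\|_n$ on the forest via $\|f_{\mE,\B}-f_{\mE,\B'}\|_n\leq T\sqrt{K_n}\,\|\B-\B'\|_\infty$. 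Summing over $(K_1,\dots,K_T)$ yields $\log N \lesssim K_n\log(T\,p\,n\,B_n/\varepsilon_n)\lesssim n\varepsilon_n^2$, as required.

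\textbf{Main obstacle.} The most delicate step is the joint prior concentration for the ensemble: the global approximation error has to be partitioned across the $T$ independently perturbed trees of $\wh\mE$ obtained by chopping branches of $\wh\mT$, and one must argue that the variance rescaling $\mathrm{Var}(\beta_{tk})=1/T$, combined with $\delta$-validity, produces the right Gaussian anti-concentration on an $\ell_\infty$-ball of radius $\varepsilon_n/(T\sqrt{K_n})$ around a target of magnitude $\|f_0\|_\infty\lesssim\sqrt{\log n}$. The $\delta$-validity assumption is what makes $\|f_{\mE,\B}-f_{\wh\mE,\wh\B}\|_n^2$ decompose into a tractable per-tree sum; without it the forest structure fatally couples the jumps. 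A secondary subtlety is that the sieve-complement step closes {\sl only} because the split rule \eqref{eq:new_psplit} is exponential in depth: under the original polynomial-in-depth rule of \cite{chipman1998bayesian}, the tail of $K$ would be too heavy to dominate $\e^{-(d+2)n\varepsilon_n^2}$, which is precisely the motivation for Corollary \ref{corollary}.
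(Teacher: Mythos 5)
Your overall strategy coincides with the paper's: verify conditions \eqref{eq:entropy1}--\eqref{eq:remain1}, use the $k$-$d$ tree ensemble $\wh\mE$ with $\pi(\wh\mE)\geq\e^{-c_2 n\varepsilon_n^2}$ plus Gaussian anti-concentration around $\wh\B$ for the prior-mass condition, and use the Galton--Watson tail bound of Corollary \ref{corollary} for the sieve complement. The one structural departure is the sieve: the paper leaves the step heights unbounded and verifies \eqref{eq:entropy1} as a \emph{local} entropy bound on $A_{\varepsilon,1}\cap\mF_n$ (imported from Section 9.1 of RP17, where the constraint $\|f_{\mE,\B}-f_0\|_n<\varepsilon$ implicitly controls the relevant coefficients), whereas you truncate $\max_{t,k}|\beta_{tk}|\leq B_n$ and count a global net.

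That route is viable in principle, but your choice $B_n=C\sqrt{\log n}$ breaks the sieve-complement step. The Gaussian tail gives
$$
\pi\Bigl(\max_{t,k}|\beta_{tk}|>B_n\Bigr)\ \lesssim\ T\,K_n\,\e^{-T B_n^2/2}\ =\ T\,K_n\,n^{-C^2T/2},
$$
which is only polynomially small in $n$, while \eqref{eq:remain1} requires $o(\e^{-(d+2)n\varepsilon_n^2})$ with $n\varepsilon_n^2\asymp n^{p/(2\nu+p)}\log n\gg\log n$. No constant $C$ closes this gap; "$C$ large enough" does not help because the required decay is super-polynomial. The repair is straightforward: take $B_n\gtrsim\sqrt{n}\,\varepsilon_n$ (or any fixed power of $n$), which costs only a factor $\log B_n\lesssim\log n$ inside the entropy count $K_n\log(T\,p\,n\,B_n/\varepsilon_n)\lesssim n\varepsilon_n^2$ and so leaves \eqref{eq:entropy1} intact --- or simply drop the truncation and use the local entropy bound as the paper does. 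The remaining steps (the per-tree union bound $\sum_t\Pi(K^t>k_n)\lesssim T\e^{-C_K k_n\log k_n}$, the choice $k_n\asymp n\varepsilon_n^2/\log n$, and the reduction of the prior-mass condition to a Gaussian ball around $\wh\B$) match the paper's argument, and your closing remarks correctly identify why the exponential-in-depth split probability \eqref{eq:new_psplit} is what makes the complement bound close.
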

\begin{proof}
Section \ref{proof:thm:weak_learner}. 
\end{proof}
Theorem \ref{thm:weak_learner} has  very important implications. It provides a frequentist theoretical justification for BART claiming that the posterior is wrapped around the truth and its learning rate is near-optimal.
As a by-product, one also obtains a statement which supports the empirical observation that BART is resilient to overfitting.

\begin{corollary}\label{corollary2}
Under the assumptions of Theorem \ref{thm:weak_learner}  we have 
$$
\Pi\left( \bigcup_{t=1}^T\{K^t>C\, n^{p/(2\nu+p)}\}  \mid \Y^{(n)}\right) \to 0
$$ 
in $\P_{f_0}^{(n)}$-probability, as $n,p \to \infty$, for a suitable constant $C>0$.
\end{corollary}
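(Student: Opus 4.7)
The plan is to apply the same sieve-based template that drives the proof of Theorem \ref{thm:weak_learner} (specifically the verification of condition \eqref{eq:remain1}), but with the exceptional event taken to be $B_n = \bigcup_{t=1}^T \{K^t > C\, K_\nu\}$ where $K_\nu = n^{p/(2\nu+p)}$. Once I can show $\Pi(B_n) \leq \e^{-(d+3) n\varepsilon_n^2}$ for $C$ large enough (with $d$ the constant from \eqref{eq:prior1}), the standard posterior-mass argument --- Markov's inequality on the numerator of the posterior coupled with the evidence lower bound on the denominator --- will immediately deliver $\Pi(B_n \mid \Y^{(n)}) \to 0$ in $\P_{f_0}^{(n)}$-probability.

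First I would bound the prior mass of $B_n$. For a single tree $\mT_t$ under the modified splitting probability \eqref{eq:new_psplit}, Lemma \ref{lemma:tail} with $c = \tfrac12 \log k$ (as in the argument leading to Corollary \ref{corollary}) yields $\pi(K^t > k) \leq \e^{-a k \log k}$ for some $a > 0$ depending only on $\alpha$. Because the trees are a-priori independent by \eqref{eq:joint_prior} and $T$ is fixed, a union bound then gives
$$ \Pi(B_n) \leq T\, \e^{-a\, C K_\nu \log(C K_\nu)}. $$
Next I would translate this into the target scaling: since $n\varepsilon_n^2 = K_\nu \log n$ and $K_\nu \log K_\nu = \tfrac{p}{2\nu+p}\, n\varepsilon_n^2$, choosing $C$ so that $a\,C\,\tfrac{p}{2\nu+p} > d+3$ suffices to force $\Pi(B_n) \leq \e^{-(d+3) n\varepsilon_n^2}$ for all large $n$.

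Finally I would feed this bound into the denominator/numerator decomposition of the posterior. Let $N = \int_{B_n} \prod_i \Pi_f(Y_i\mid\x_i)/\Pi_{f_0}(Y_i\mid\x_i)\,\der\Pi(f)$ and let $D$ be the corresponding integral over all of $\mF$. By Fubini, $\E_{f_0} N = \Pi(B_n) \leq \e^{-(d+3) n\varepsilon_n^2}$, so Markov yields $N \leq \e^{-(d+2) n\varepsilon_n^2}$ on a set of $\P_{f_0}^{(n)}$-probability tending to one. On the parallel high-probability event where $D \geq \e^{-(d+1) n\varepsilon_n^2}$ --- established in the proof of Theorem \ref{thm:weak_learner} by converting the prior-concentration statement from Section \ref{sec:prior_conc} via the standard Ghosal--van der Vaart evidence lemma --- the ratio $N/D$ is then $O(\e^{-n\varepsilon_n^2}) \to 0$. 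The subtlety I expect to spend the most time on is checking that restricting the BART prior support to $\delta$-valid ensembles does not erode the single-tree tail bound derived from Lemma \ref{lemma:tail}; since the restriction only re-normalizes $\Pi$ by a factor bounded away from zero (already part of the diversification bookkeeping inherited from RP17), I expect the bound to carry over essentially unchanged.
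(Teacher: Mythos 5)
Your proposal is correct and follows essentially the same route as the paper: the paper's (one-line) proof invokes the prior tail bound on tree sizes already established when verifying condition \eqref{eq:remain1} in the proof of Theorem \ref{thm:weak_learner}, together with Lemma 1 of \cite{ghosal2007convergence}, which is precisely the numerator--denominator (Markov plus evidence-lower-bound) argument you spell out. Your bookkeeping $K_\nu\log K_\nu=\tfrac{p}{2\nu+p}\,n\varepsilon_n^2$ and the union bound over the $T$ independent trees match the paper's treatment of the sieve-complement mass.
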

{\begin{proof}
The proof follows from the proof of Theorem \ref{thm:weak_learner} and Lemma 1 of \cite{ghosal2007convergence}. 
\end{proof}}
{In other words, the posterior distribution rewards ensembles that consist of small trees whose size does not overshoot the optimal number of steps {$K_\nu=n^{p/(2\nu+p)}$} by much.
In this way, the posterior is fully adaptive to unknown smoothness, not overfitting in the sense of  split overuse.}

\section{Discussion}
In this work, we have built on results in \cite{rockova2017posterior} to show optimal posterior convergence rate of the BART method  in the $\|\cdot\|_n$ sense. We have proposed a minor modification of the prior that guarantees this optimal performance. 
 Similar results have been obtained for other Bayesian non-parametric constructions such as Polya trees (\cite{castillo2017polya}), Gaussian processes (\cite{van2008rates}, \cite{castillo2008lower}) and deep   ReLU neural networks \citep{polson_rockova}. 
Up to now, the increasing popularity of BART has relied on its practical performance across a wide variety of problems. The goal of this and future theoretical developments is to 
 establish BART as a rigorous statistical tool with  solid theoretical guarantees. Similar guarantees have been obtained for variants of the traditional forests/trees by multiple authors including \cite{gordon_olshen1,gordon_olshen2,donoho,biau_etal,scornet,wager}. Our posterior concentration results   break the path towards establishing other theoretical properties such as Bernstein-von Mises theorems (semi and non-parametric) and/or uncertainty quantification statements. 

\section{Proof of Theorem \ref{thm:weak_learner}}\label{proof:thm:weak_learner}
The proof follows from   Lemma \ref{lemma:kdtree}, Lemma \ref{lemma:tail} and a modification proof of Theorem 5.1 of RP17. Below, we outline the backbone of the proof and highlight those places where the proof of RP17 had to be modified.
Our approach consists of establishing conditions \eqref{eq:entropy1}, \eqref{eq:prior1} and \eqref{eq:remain1} for $\varepsilon_n=n^{-\alpha/(2\alpha+p)}\log^{1/2} n$.
The first step requires constructing the sieve $\mathcal{F}_n\subset \mF$. For  a given $n\in\N$, $T\in\N$  and a suitably large integer $k_n$ (chosen later), we define the sieve
as follows:
\begin{equation}\label{sieve2}
\mF_n=\bigcup\limits_{\substack{\bm{K}:  K^t\leq k_n}}\bigcup_{\mE\in\mV\mE^{\bm K}} \mF(\mE),
\end{equation}
 where $\mF(\mE)$ consists of all functions $f_{\mE,\B}$ of the form \eqref{eq:forest_mapping} that are supported on a {\sl $\delta$-valid} ensemble $\mE$.
 All $\delta$-valid ensembles consisting of $T$ trees of sizes $\bm K=(K^1,\dots, K^T)'$ are  denoted with $\mV\mE^{\bm K}$. The sieve \eqref{sieve2} is different from the one in the proof of Theorem 5.1 of RP17. Their sieve  consisted of all ensembles whose {\sl total} number of leaves was smaller than $k_n$. 
 Here, we allow for each tree individually to have up to $k_n$ leaves.

Regarding Condition \eqref{eq:entropy1}, RP17 in Section 9.1 obtain an upper bound on the covering number for $\mF(\mE)$ as well as the cardinality of $\mV\mE^{\bm K}$ which together 
yield (for some $D>0$)
\begin{align}
&\log N\Big(\tfrac{\varepsilon}{36}, \Big\{f_{\mE,\B} \in \mathcal{F}_n: \|f_{\mE,\B} - f_0\|_n < \varepsilon\Big\}, \|.\|_n\Big)<(k_n+1)T\log (n\,p\,k_n)\notag  \\
&\quad\quad\quad\quad+D\,T\,k_n\log\left(108\, \sqrt{T\,k_n} n^{1+\delta/2}\right). \label{overall_bound}
\end{align}
With the choice $k_n=\lfloor  \wt C n\varepsilon_n^2/\log n\rfloor\asymp n^{p/(2\alpha+p)}$ (for a  large enough constant $\wt C>0$), fixed $T\in\N$ and assuming $p\lesssim \log^{1/2} n$, the
Condition \ref{eq:entropy1} will be met. 

Next, we wish to show that the prior assigns enough mass around the truth in the sense that
\begin{equation}\label{eq:prior}
\Pi(f_{\mE,\B} \in \F : \|f_{\mE,\B} - f_0\|_n \leq \varepsilon_n) \geq \e^{-d\,n\varepsilon_n^2} 
\end{equation}
for some large enough  $d>2$. 
We establish this condition by finding a lower bound on the prior probability in \eqref{eq:prior}, using only step functions supported on a single ensemble.
According to Lemma 10.1 of RP17 there exists a $1$-valid  tree ensemble $f_{\wh\mE,\wh \B}$ that approximates $f_0$ well in the sense that
$$
\|f_0 - f_{\wh{\mE},\,\wh{\mB}}\|_n \leq   ||f_0||_{\Ha}C\, p/\wh K^{\alpha/p}
$$ 
for some $C>0$,  where $\|f_0\|_{\Ha}$ is the H\"{o}lder norm and where $\wh K=2^{s\,p}$ for some $s\in\N$. 
Next, we find  the smallest  $\wh K$  such that $ ||f_0||_{\Ha}C\,  p/\wh K^{\alpha/p}<\varepsilon_n/2$.  This value will be denoted by  $a_n$ and it
 satisfies 
 \begin{equation}\label{bound}
\left(\frac{2C_0p}{\varepsilon_n}\right)^\frac{p}{\alpha} \leq a_n \leq  \left(\frac{2C_0p}{\varepsilon_n}\right)^\frac{p}{\alpha} + 1.
\end{equation}
Under the assumption $p\lesssim \log^{1/2}n$ we have $a_n\asymp n^{p/(2\alpha+p)}$.
Denote by $\wh{\mE}$ the approximating ensemble described in Section \ref{sec:prior_conc}. Next, we denote  with $\wh{\bm{K}}=(\wh{K}^1,\dots,\wh{K}^T)'$ the vector of tree sizes, where $\log_2 a_n+1\leq\wh{K}^t\leq  a_n$.
Then we can lower-bound the left-hand side of \eqref{eq:prior} with
\begin{equation}\label{lb}
\pi(\wh\mE) \Pi\left(f_{\wh{\mE},{\mB}}\in\mF(\wh{\mE}): \|f_{\wh{\mE},{\mB}} - f_0\|_n\leq \varepsilon_n\right),
\end{equation}
where $\mF(\wh{\mE})$ consists of all additive tree functions supported on $\wh{\mE}$. In Section \ref{sec:prior_conc} we show that $\pi(\wh\mE)>\e^{-c_2\,n\varepsilon_n^2}$. Moreover,
RP17 in Section 10.2 show that, for some $C>0$,
$$
\Pi\left(f_{\wh{\mE},{\mB}}\in\mF(\wh{\mE}): \|f_{\wh{\mE},{\mB}} - f_0\|_n\leq \varepsilon_n\right)>\Pi\left(\mB\in\R^{\widetilde a_n}:\|\mB-\wh{\mB}\|_2<\frac{\varepsilon_n}{2} \frac{1}{C\sqrt{\widetilde{a}_n}} \right),
$$
where  $\widetilde{a}_n=\sum_{t=1}^{{T}}\wh{K}^t\leq T\,a_n$ and where $\wh{\mB}\in\R^{\wt{a}_n}$ are the steps of the approximating additive trees from Lemma 10.1 of RP17.  This can be further lower-bounded with
\begin{equation}\label{eq:ratio2}
\e^{-\frac{\varepsilon_n^2}{8C^2\wt{a}_n}-{a}_n(C_2\|f_0\|_\infty^2 +\log2)}\left(\frac{\varepsilon_n^2}{4C^2\wt{a}_n}\right)^\frac{\widetilde{a}_n}{2}   \left(\frac{2}{\widetilde{a}_n}\right)^{\widetilde{a}_n/2 + 1}.
\end{equation}
Under the assumption $\|f_0\|_\infty\lesssim\log^{1/2}n$, this term is larger than $\e^{-D\,\wt a_n\log n}$ for some $D>0$. Since $\wt a_n\lesssim n\varepsilon_n^2$,  there exists $d>0$ such that $\Pi(f_{\mE,\B}\in\mF : \|f_{\mE,\B} - f_0\|_n \leq \varepsilon_n)>\e^{-d\,n\varepsilon_n^2}$.

Lastly,  Condition \eqref{eq:remain1} entails showing  that  $\Pi(\mF\backslash\mF_n)=o(\e^{-(d+2)\,n\varepsilon_n^2})$ for $d$ deployed in the previous paragraph. It suffices to show that 
 $$
 \Pi\left( \bigcup_{t=1}^T\{K^t>k_n\}\right)\e^{(d+2)\,n\varepsilon_n^2}\rightarrow 0.
 $$
Under the independent Galton-Watson prior on each tree partition, Corollary \ref{corollary} implies that the probability above can be upper-bounded with
$
\sum_{t=1}^T\Pi(K^t>k_n)\lesssim T\e^{-C_K\, k_n\log k_n}.
$
With $k_n\asymp n\varepsilon_n^2/\log n$ and a fixed $T\in\N$, we have $T\e^{-C_K\, k_n\log k_n+ (d+2)\,n\varepsilon_n^2}\rightarrow 0$ for $C_K$ large enough.

\bibliographystyle{plainnat}

\end{document}